\newtheorem{theorem}{Theorem}[section]
\newtheorem{lemma}[theorem]{Lemma}
\newtheorem{proposition}[theorem]{Proposition}
\newtheorem{corollary}[theorem]{Corollary}
\newtheorem{definition}[theorem]{Definition}
\newtheorem{example}[theorem]{Example}
\newtheorem{remark}[theorem]{Remark}
\newcommand{\R}{\mathbb{R}}
\newcommand{\C}{\mathbb{C}}
\newcommand{\N}{\mathbb{N}}
\newcommand{\K}{\mathbb{K}}
\newcommand{\X}{\mathcal{X}}
\newcommand{\Hs}{\mathcal{H}}
\newcommand{\norm}[1]{\|#1\|}
\newcommand{\abs}[1]{|#1|}
\newcommand{\ip}[2]{\langle #1, #2 \rangle}
\title{
    \sffamily\bfseries 
    Algorithmic Stability in Infinite Dimensions:\\[0.5em] 
    \Large Characterizing Unconditional Convergence in Banach Spaces
}
\author{
    \textbf{Przemysław Spyra} \\[0.5em]
    \small Faculty of Applied Mathematics \\
    \small AGH University of Science and Technology \\
    \small Krakow, Poland \\[0.3em]
    \small \texttt{przspyra@student.agh.edu.pl}
}
\date{}
\begin{document}

\maketitle

\begin{abstract}
\noindent 
The distinction between conditional, unconditional, and absolute convergence in infinite-dimensional spaces has fundamental implications for computational algorithms. While these concepts coincide in finite dimensions, the Dvoretzky-Rogers theorem establishes their strict separation in general Banach spaces. We present a comprehensive characterization theorem unifying seven equivalent conditions for unconditional convergence: permutation invariance, net convergence, subseries tests, sign stability, bounded multiplier properties, and weak uniform convergence. These theoretical results directly inform algorithmic stability analysis, governing permutation invariance in gradient accumulation for Stochastic Gradient Descent and justifying coefficient thresholding in frame-based signal processing. Our work bridges classical functional analysis with contemporary computational practice, providing rigorous foundations for order-independent and numerically robust summation processes.

\vspace{0.4em}
\noindent 
\textbf{Keywords:} Unconditional convergence, Banach spaces, Dvoretzky-Rogers theorem, algorithmic stability, stochastic gradient descent, signal processing, frame theory.
\end{abstract}

\section{Introduction}

\subsection{Motivation and Context}

The convergence behavior of infinite series lies at the heart of mathematical analysis and its applications. A classical result, the Riemann Rearrangement Theorem, reveals a striking phenomenon: a conditionally convergent series $\sum_{n=1}^\infty a_n$ can be rearranged to converge to \emph{any} prescribed value or diverge entirely. By contrast, absolutely convergent series exhibit \emph{permutation invariance}, with sums independent of summation order.

This distinction extends far beyond pure mathematics. In machine learning, neural network training via Stochastic Gradient Descent involves accumulating gradient updates $\sum_{i=1}^N g_i$ across mini-batches. \\ The mathematical properties of this summation determine whether results depend on data ordering, a critical concern for reproducibility and distributed training. Similarly, signal processing representations $f = \sum_{n=1}^\infty c_n \phi_n$ using frame expansions require stability under coefficient modifications such as thresholding and quantization. Conditional convergence renders such operations fundamentally unpredictable.

\subsection{The Infinite-Dimensional Challenge}

In finite-dimensional spaces $\R^n$ or $\C^n$, absolute convergence (convergence of $\sum \norm{x_n}$) and unconditional convergence (convergence under all permutations) are equivalent. This equivalence fails dramatically in infinite dimensions. The celebrated Dvoretzky-Rogers theorem (1950) establishes that \emph{every} infinite-dimensional Banach space admits series converging unconditionally but not absolutely~\cite{dvoretzky1950}. This creates a strict hierarchy:
\[
\text{Absolute} \subsetneq \text{Unconditional} \subsetneq \text{Conditional}.
\]
Understanding this hierarchy is essential for both theoretical analysis and practical algorithm design.

\subsection{Contributions}

This paper makes three primary contributions:

\textbf{1. Unified Characterization.} We present Theorem~\ref{thm:main}, synthesizing seven equivalent formulations of unconditional convergence spanning topological (net convergence), algebraic (subseries and sign stability), and functional-analytic (bounded multipliers, weak uniform convergence) perspectives.

\textbf{2. Rigorous Proofs.} We provide complete, self-contained proofs organized in logical cycles, clarifying interdependencies between conditions with explicit estimates and constructive arguments.

\textbf{3. Algorithmic Applications.} We connect abstract results to concrete computational problems: establishing when gradient accumulation remains ordering-invariant in machine learning, and justifying coefficient manipulation in signal processing under unconditional bases.

\subsection{Related Work}

The theory of unconditional convergence has deep roots in functional analysis. Orlicz (1933) proved that unconditional convergence in Hilbert spaces implies square-summability~\cite{orlicz1933}. The Dvoretzky-Rogers theorem~\cite{dvoretzky1950} showed this cannot be strengthened to absolute convergence. Standard references include Heil~\cite{heil2011}, Megginson~\cite{megginson1998}, and Rudin~\cite{rudin1991}.

Applications to optimization have been explored in coordinate descent~\cite{wright2015} and randomized algorithms~\cite{richtarik2014}. Our work emphasizes mathematical prerequisites for algorithmic stability in both deterministic and stochastic settings.

\subsection{Organization}

Section~\ref{sec:prelim} establishes notation and definitions. Section~\ref{sec:finite} proves finite-dimensional equivalence. Section~\ref{sec:infinite} presents the infinite-dimensional gap via Dvoretzky-Rogers and Orlicz theorems. Section~\ref{sec:main} contains our main characterization theorem with complete proofs. Section~\ref{sec:applications} develops applications to machine learning and signal processing. Section~\ref{sec:conclusion} concludes with open problems.

\section{Preliminaries}
\label{sec:prelim}

Let $\K \in \{\R, \C\}$ denote the scalar field. We assume familiarity with normed spaces and basic functional analysis~\cite{rudin1991}.

\begin{definition}[Banach and Hilbert Spaces]
A \emph{Banach space} is a complete normed vector space $(\X, \norm{\cdot})$. A \emph{Hilbert space} $\Hs$ is a Banach space whose norm is induced by an inner product: $\norm{x}^2 = \ip{x}{x}$.
\end{definition}

\begin{definition}[Series Convergence]
\label{def:convergence}
Let $\{x_n\}_{n=1}^\infty \subset \X$ be a sequence in a Banach space.

\begin{enumerate}[label=(\alph*), leftmargin=2em, itemsep=0.3em]
\item The series $\sum_{n=1}^\infty x_n$ \emph{converges} to $x \in \X$ if
\[
\lim_{N \to \infty} \norm{x - \sum_{n=1}^N x_n} = 0.
\]

\item The series is \emph{absolutely convergent} if $\sum_{n=1}^\infty \norm{x_n} < \infty$.

\item The series is \emph{unconditionally convergent} if for every bijection $\sigma : \N \to \N$, the rearranged series $\sum_{n=1}^\infty x_{\sigma(n)}$ converges.
\end{enumerate}
\end{definition}

\begin{remark}
Absolute convergence trivially implies convergence by completeness. Unconditional convergence also implies convergence (take $\sigma = \text{identity}$). The converse implications are our focus.
\end{remark}

To characterize unconditional convergence topologically, we introduce net convergence.

\begin{definition}[Directed Sets and Nets]
A set $I$ with relation $\leq$ is \emph{directed} if: (i)~$\leq$ is reflexive and transitive; (ii)~$\forall i, j \in I$, $\exists k \in I$ such that $i \leq k$ and $j \leq k$. A \emph{net} in $\X$ is a map $\phi : I \to \X$ indexed by a directed set~$I$.
\end{definition}

\begin{definition}[Convergence via Nets]
Let $\mathcal{F}$ denote the collection of finite subsets of $\N$, directed by inclusion $\subseteq$. We say $\sum x_n$ converges unconditionally to $x$ with respect to $\mathcal{F}$ (written $x = \lim_{F \in \mathcal{F}} \sum_{n \in F} x_n$) if
\[
\forall \varepsilon > 0, \; \exists F_0 \in \mathcal{F}, \; \forall F \supseteq F_0 : \norm{x - \sum_{n \in F} x_n} < \varepsilon.
\]
\end{definition}

\begin{remark}
If the net limit exists, then the ordinary series converges. Indeed, choose $F_0$ for given $\varepsilon$, let $N_0 = \max F_0$, and note $\{1, \ldots, N\} \supseteq F_0$ for all $N \geq N_0$.
\end{remark}

\section{Finite-Dimensional Equivalence}
\label{sec:finite}

We establish that absolute and unconditional convergence coincide in finite dimensions, intuition that fails to generalize.

\begin{lemma}[Complex Scalars]
\label{lem:complex}
Let $\{c_n\}_{n=1}^\infty \subset \C$. Then $\sum c_n$ converges absolutely if and only if it converges unconditionally.
\end{lemma}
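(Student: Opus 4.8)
The plan is to prove both implications, dispatching the easy direction quickly and reducing the substantive direction to the real-scalar case.

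For the forward implication (absolute $\Rightarrow$ unconditional), I would argue directly from the Cauchy criterion. Given $\sum_n |c_n| < \infty$, for each $\varepsilon > 0$ choose $N$ with $\sum_{n > N} |c_n| < \varepsilon$. For an arbitrary bijection $\sigma : \N \to \N$, once $M$ is large enough that $\{\sigma(1), \ldots, \sigma(M)\} \supseteq \{1, \ldots, N\}$, every further block of rearranged terms is dominated by $\sum_{n > N} |c_n| < \varepsilon$. Hence the partial sums $\sum_{k=1}^M c_{\sigma(k)}$ form a Cauchy sequence, and completeness of $\C$ delivers convergence.

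For the reverse implication (unconditional $\Rightarrow$ absolute), I would first reduce to real scalars. Writing $c_n = a_n + i b_n$, continuity of $x \mapsto \operatorname{Re} x$ and $x \mapsto \operatorname{Im} x$ forces unconditional convergence of $\sum c_n$ to pass to unconditional convergence of both $\sum a_n$ and $\sum b_n$. Once the real case yields $\sum |a_n| < \infty$ and $\sum |b_n| < \infty$, the elementary estimate $|c_n| \le |a_n| + |b_n|$ establishes absolute convergence of $\sum c_n$. The heart of the argument is therefore the real case, which I would prove by contraposition. Suppose $\sum a_n$ converges (as it must, being unconditional) yet $\sum |a_n| = \infty$. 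Splitting into positive and negative parts, both $\sum a_n^+ = \infty$ and $\sum a_n^- = \infty$, since finiteness of either together with convergence of $\sum a_n$ would force finiteness of $\sum |a_n|$. I would then construct a divergent rearrangement: in phase $k$, append successive unused positive terms until the running sum exceeds $k$ (possible because the positive part diverges), then append exactly one unused negative term. This defines a bijection whose partial sums exceed every integer infinitely often, contradicting unconditional convergence.

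The main obstacle is the bookkeeping in this final construction: I must simultaneously verify that $\sigma$ uses every index of $\N$ exactly once and that the partial sums are genuinely unbounded. The delicate point is that greedily consuming positive terms in each phase might appear to starve later phases; I would resolve this by noting that each phase terminates after finitely many steps while there are infinitely many phases, so both the positive and the negative terms are exhausted in the limit, making $\sigma$ a genuine bijection. Confirming that unbounded partial sums contradict convergence of the rearranged series is then immediate, completing the contrapositive.
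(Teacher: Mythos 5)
Your proof is correct, and its overall skeleton matches the paper's: both directions reduce the substantive implication to the real-scalar case via $c_n = a_n + i b_n$ and derive a contradiction from a divergent rearrangement of a conditionally convergent real series. The one genuine difference is that the paper simply invokes the Riemann Rearrangement Theorem to produce the divergent rearrangement, whereas you inline that step: you show both $\sum a_n^+$ and $\sum a_n^-$ diverge and run the greedy construction (push past $k$ with positive terms, insert one negative term, repeat). This buys self-containedness at the cost of the bookkeeping you flag; your resolution of it is sound --- each phase terminates because $\sum a_n^+ = \infty$, the $k$-th negative term is consumed in phase $k$, and since the running sums must exceed every $k$ while negative terms only decrease them, infinitely many positive terms are consumed, and taking them in their natural order then forces all of them to be used, so $\sigma$ is a bijection with unbounded partial sums. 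The paper's route is shorter and defers the construction to a classical citation; yours makes the lemma independent of that reference. Both are valid; note also that your reduction via continuity of $\operatorname{Re}$ and $\operatorname{Im}$ is exactly the (implicit) justification the paper uses for why $\sum a_n$ converges unconditionally.
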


\begin{proof}
($\Rightarrow$) Standard: if $\sum \abs{c_n} < \infty$, then $\sum c_{\sigma(n)}$ is Cauchy for any permutation $\sigma$, hence convergent.

($\Leftarrow$) Suppose $\sum c_n$ converges unconditionally but not absolutely. Write $c_n = a_n + ib_n$ with $a_n, b_n \in \R$. If $\sum \abs{c_n} = \infty$, then $\sum \abs{a_n} = \infty$ or $\sum \abs{b_n} = \infty$. Without loss of generality, assume $\sum \abs{a_n} = \infty$.

Since $\sum a_n$ converges (by unconditional convergence of $\sum c_n$) yet $\sum \abs{a_n} = \infty$, the series $\sum a_n$ is conditionally convergent. By the Riemann Rearrangement Theorem, there exists a permutation $\tau$ such that $\sum a_{\tau(n)}$ diverges. Defining $\sigma(n) = \tau(n)$ yields a permutation where $\sum c_{\sigma(n)}$ diverges, contradicting unconditional convergence.
\end{proof}

\begin{proposition}[Finite-Dimensional Spaces]
\label{prop:finite}
Let $\{f_n\}_{n=1}^\infty$ be a sequence in $\X = \R^d$ (or $\C^d$). Then $\sum f_n$ converges absolutely if and only if it converges unconditionally.
\end{proposition}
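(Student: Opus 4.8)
The plan is to reduce the finite-dimensional case to the scalar case (Lemma~\ref{lem:complex}) by working coordinate-wise. The key observation is that convergence of a series in $\R^d$ or $\C^d$ is equivalent to convergence of each of the $d$ coordinate series, since all norms on a finite-dimensional space are equivalent and coordinate projections are continuous linear maps.

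The forward direction ($\Rightarrow$) is routine: if $\sum \norm{f_n} < \infty$, then the partial sums of any rearrangement form a Cauchy sequence by the triangle inequality, and completeness of $\R^d$ (or $\C^d$) gives convergence. I would dispatch this in one sentence.

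For the reverse direction ($\Leftarrow$), I would write $f_n = (f_n^{(1)}, \ldots, f_n^{(d)})$ in terms of coordinates. First I would show that unconditional convergence of $\sum f_n$ forces unconditional convergence of each scalar series $\sum_n f_n^{(k)}$: for any bijection $\sigma$, applying the $k$-th coordinate projection $\pi_k$ (a bounded linear operator) to the convergent rearranged series $\sum_n f_{\sigma(n)}$ shows $\sum_n f_{\sigma(n)}^{(k)} = \pi_k\bigl(\sum_n f_{\sigma(n)}\bigr)$ converges. Hence each coordinate series is unconditionally convergent. By Lemma~\ref{lem:complex}, each is then absolutely convergent, i.e. $\sum_n \abs{f_n^{(k)}} < \infty$ for every $k$. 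Finally, using a norm equivalence estimate such as $\norm{f_n} \leq C \sum_{k=1}^d \abs{f_n^{(k)}}$ (valid for any norm by equivalence with the $\ell^1$-norm in finite dimensions), summing over $n$ yields $\sum_n \norm{f_n} \leq C \sum_{k=1}^d \sum_n \abs{f_n^{(k)}} < \infty$, which is absolute convergence.

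I do not expect any serious obstacle here, since the heavy lifting (the scalar Riemann rearrangement argument) is already isolated in Lemma~\ref{lem:complex}. The only point requiring mild care is the appeal to norm equivalence: one must justify that the ambient norm $\norm{\cdot}$ on $\X$ is comparable to the coordinatewise $\ell^1$-norm, which follows from the standard finite-dimensional fact that all norms are equivalent. The argument is genuinely finite-dimensional precisely at this step, and it is exactly where the generalization to infinite dimensions breaks down, motivating the Dvoretzky--Rogers phenomenon discussed later.
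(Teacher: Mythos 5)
Your proposal is correct and follows essentially the same route as the paper: reduce to the scalar case via coordinate projections, invoke Lemma~\ref{lem:complex} coordinate-wise, and pass between the ambient norm and the coordinate absolute values using finite-dimensional norm equivalence. The only cosmetic difference is that you handle the forward direction directly by the Cauchy criterion rather than routing it through the coordinate-wise equivalence, which changes nothing of substance.
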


\begin{proof}
Write $f_n = \sum_{j=1}^d c_n^{(j)} e_j$ where $\{e_1, \ldots, e_d\}$ is the standard basis. Then
\[
\sum_{n=1}^\infty \norm{f_n} < \infty \iff \sum_{n=1}^\infty \abs{c_n^{(j)}} < \infty \; \forall j = 1, \ldots, d.
\]
The forward direction uses $\abs{c_n^{(j)}} \leq \norm{f_n}$; the reverse uses norm equivalence in finite dimensions. By Lemma~\ref{lem:complex}, coordinate-wise absolute convergence is equivalent to coordinate-wise unconditional convergence. Since convergence in $\R^d$ is equivalent to coordinate-wise convergence, the result follows.
\end{proof}

\section{The Infinite-Dimensional Gap}
\label{sec:infinite}

The equivalence between absolute and unconditional convergence is purely finite-dimensional. In infinite dimensions, these concepts diverge.

\begin{lemma}
\label{lem:abs_implies_uncond}
In any Banach space $\X$, absolute convergence implies unconditional convergence.
\end{lemma}

\begin{proof}
Suppose $\sum \norm{x_n} < \infty$. For any permutation $\sigma$, we have $\sum \norm{x_{\sigma(n)}} = \sum \norm{x_n} < \infty$. Since $\X$ is complete, the rearranged series $\sum x_{\sigma(n)}$ converges.
\end{proof}

The converse fails spectacularly:

\begin{theorem}[Dvoretzky-Rogers, 1950~\cite{dvoretzky1950}]
\label{thm:DR}
Every infinite-dimensional Banach space $\X$ contains a series $\sum x_n$ that converges unconditionally but not absolutely.
\end{theorem}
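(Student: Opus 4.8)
The plan is to reduce the theorem to a single, deep finite-dimensional geometric fact and then stitch together the resulting ``almost Euclidean'' systems into an infinite series. The central difficulty is that a general infinite-dimensional $\X$ has no orthonormal basis or inner-product structure to exploit, so I must first \emph{manufacture} vectors that behave Euclideanly with respect to summation. The tool for this is the \textbf{Dvoretzky--Rogers Lemma}: for every $n$, any normed space of dimension at least $n$ contains unit vectors $x_1, \ldots, x_n$ (so $\norm{x_i} = 1$) satisfying, for all scalars $a_1, \ldots, a_n \in \K$,
\[
\norm{\sum_{i=1}^{n} a_i x_i} \le \Bigl(\sum_{i=1}^{n} \abs{a_i}^2\Bigr)^{1/2}.
\]
I would take this lemma as the key ingredient; its proof is the real obstacle and I discuss it last.

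Granting the lemma, the construction proceeds in blocks. Since $\X$ is infinite-dimensional, for each $k \in \N$ I would select a subspace $E_k \subseteq \X$ with $\dim E_k = n_k$ and apply the lemma to obtain unit vectors $x_1^{(k)}, \ldots, x_{n_k}^{(k)} \in E_k$ with the displayed $\ell^2$ bound. I then scale the $k$-th block by a factor $\lambda_k > 0$ and concatenate all the scaled vectors $\lambda_k x_i^{(k)}$ into a single sequence $\{y_m\}_{m=1}^\infty$. The two competing requirements are $\sum_k n_k \lambda_k = \infty$ (to defeat absolute convergence, since $\sum_m \norm{y_m} = \sum_k n_k \lambda_k$) and a summable ``$\ell^2$ budget'' $\sum_k \lambda_k n_k^{1/2} < \infty$ (to secure unconditional convergence). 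Because $n_k^{1/2} \to \infty$, these are simultaneously achievable; a clean choice is $n_k = k^4$ and $\lambda_k = k^{-4}$, giving $\sum_k n_k \lambda_k = \sum_k 1 = \infty$ while $\sum_k \lambda_k n_k^{1/2} = \sum_k k^{-2} < \infty$.

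To verify unconditional convergence I would show the net of finite partial sums is Cauchy with respect to the directed family $\mathcal{F}$, which yields convergence by completeness and convergence of every rearrangement exactly as in the Remark following the net definition. Given $\varepsilon > 0$, pick $K$ so that $\sum_{k > K} \lambda_k n_k^{1/2} < \varepsilon$ and let $F_0$ collect all indices from blocks $1, \ldots, K$. For any finite index set $F$ disjoint from $F_0$, partition $F$ by blocks as $F = \bigcup_{k>K} F_k$; applying the lemma within each block with coefficients equal to $1$ gives $\norm{\sum_{i \in F_k} \lambda_k x_i^{(k)}} \le \lambda_k \abs{F_k}^{1/2} \le \lambda_k n_k^{1/2}$, so by the triangle inequality $\norm{\sum_{m \in F} y_m} \le \sum_{k>K} \lambda_k n_k^{1/2} < \varepsilon$. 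Thus the tails are uniformly small, establishing the Cauchy criterion and hence unconditional convergence, while $\sum_m \norm{y_m} = \infty$ rules out absolute convergence.

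The genuinely hard step, and the one I expect to absorb most of the work, is the Dvoretzky--Rogers Lemma itself: producing $\norm{\cdot}$-unit vectors that are nonetheless controlled by the Euclidean $\ell^2$ norm of the coefficients. The standard route is to introduce the John ellipsoid of maximal volume inscribed in the unit ball of $E_k$, giving an auxiliary inner product, and then to select contact points of the ellipsoid with the sphere greedily so that successive vectors are nearly orthogonal in the auxiliary structure while retaining Banach norm close to $1$. Controlling this selection—ensuring enough almost-orthogonal unit vectors exist as the dimension grows—is where the subtle volumetric estimates enter, and it is the crux on which the whole theorem rests.
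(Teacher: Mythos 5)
The paper does not actually prove Theorem~\ref{thm:DR}; it cites Dvoretzky--Rogers and offers only a one-sentence remark describing the shape of the construction (unit vectors $x_n$ and coefficients in $\ell^2 \setminus \ell^1$). Your proposal fills in exactly that sketch with the standard argument, and its overall architecture is sound: reduce to a finite-dimensional lemma, build blocks of almost-Euclidean unit vectors in finite-dimensional subspaces, scale so that $\sum_k n_k\lambda_k = \infty$ while $\sum_k \lambda_k n_k^{1/2} < \infty$, and verify the Cauchy-net condition blockwise. The choice $n_k = k^4$, $\lambda_k = k^{-4}$ works, the blockwise estimate $\norm{\sum_{i\in F_k}\lambda_k x_i^{(k)}} \le \lambda_k \abs{F_k}^{1/2}$ is the right way to control arbitrary finite tails, and you correctly identify the Dvoretzky--Rogers lemma (via the John ellipsoid) as the genuine content --- indeed more accurately than the paper's own remark, which attributes the 1950 proof to Dvoretzky's later (1961) theorem on almost-spherical sections.

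Two points need repair, neither fatal. First, the lemma as you state it is too strong: one cannot in general find $n$ unit vectors in an $n$-dimensional space with $\norm{\sum a_i x_i} \le (\sum \abs{a_i}^2)^{1/2}$ with constant exactly $1$. The correct Dvoretzky--Rogers lemma produces, from an $n$-dimensional subspace, roughly $n/2$ unit vectors satisfying the bound with a universal constant $C$ (e.g.\ $C=\sqrt{2}$ or $3$, coming from the fact that only about half of a John-orthonormal basis can be guaranteed to have norm bounded below). Your construction is insensitive to this: replace the budget by $C\sum_k \lambda_k n_k^{1/2}$ and take the vectors from subspaces of dimension $2n_k$; everything else is unchanged. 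Second, your verification establishes condition (iii) of Theorem~\ref{thm:main}, but the Remark after the net definition only yields convergence of the identity ordering; to conclude convergence of \emph{every} rearrangement you should invoke the implication $(iii)\Rightarrow(i)$ of Theorem~\ref{thm:main} (or repeat its two-line Cauchy argument for an arbitrary permutation). With those adjustments the proof is complete modulo the lemma, whose John-ellipsoid proof you have correctly located as the crux.
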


\begin{remark}
The original proof constructs $x_n$ with $\norm{x_n} = 1$ and $\{c_n\} \in \ell^2 \setminus \ell^1$ such that $\sum c_n x_n$ converges unconditionally. This uses Dvoretzky's theorem on almost-spherical sections of convex bodies~\cite{dvoretzky1950}.
\end{remark}

\begin{example}[Hilbert Space Illustration]
\label{ex:hilbert}
Let $\Hs = \ell^2$ with orthonormal basis $\{e_n\}_{n=1}^\infty$. Define $x_n = \frac{1}{n} e_n$.

\textbf{Claim 1:} $\sum x_n$ converges unconditionally.

\textit{Proof:} For any permutation $\sigma$, orthogonality gives
\[
\norm{\sum_{k=1}^N x_{\sigma(k)}}^2 = \sum_{k=1}^N \frac{1}{\sigma(k)^2} \leq \sum_{k=1}^\infty \frac{1}{k^2} < \infty.
\]
Hence partial sums are Cauchy, so convergent. \qed

\textbf{Claim 2:} $\sum x_n$ does not converge absolutely.

\textit{Proof:} $\sum \norm{x_n} = \sum \frac{1}{n} = \infty$. \qed
\end{example}

For Hilbert spaces, we have a partial converse to Lemma~\ref{lem:abs_implies_uncond}:

\begin{theorem}[Orlicz, 1933~\cite{orlicz1933}]
\label{thm:orlicz}
If $\sum x_n$ converges unconditionally in a Hilbert space $\Hs$, then $\sum \norm{x_n}^2 < \infty$.
\end{theorem}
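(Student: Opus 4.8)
The plan is to reduce the problem to a uniform bound on signed partial sums and then invoke the exact Hilbert-space averaging identity. For real signs $\theta = (\theta_1, \dots, \theta_N) \in \{-1,+1\}^N$, expanding the squared norm and summing over all $2^N$ sign patterns makes every cross term vanish, since $\sum_\theta \theta_i \theta_j = 0$ whenever $i \neq j$, whereas the diagonal terms contribute $\theta_n^2 = 1$. This yields
\[
\frac{1}{2^N} \sum_{\theta \in \{-1,+1\}^N} \Bigl\| \sum_{n=1}^N \theta_n x_n \Bigr\|^2 = \sum_{n=1}^N \norm{x_n}^2 .
\]
Consequently, if I can produce a constant $M$, independent of both $N$ and the sign pattern, such that $\bigl\| \sum_{n=1}^N \theta_n x_n \bigr\| \le M$, then the left-hand average is at most $M^2$, and letting $N \to \infty$ gives $\sum_{n=1}^\infty \norm{x_n}^2 \le M^2 < \infty$, which is exactly the assertion.

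The source of such a uniform bound is a Cauchy-type criterion for unconditional convergence, which I would prove as the central lemma: for every $\varepsilon > 0$ there exists $N_0$ such that $\bigl\| \sum_{n \in F} x_n \bigr\| < \varepsilon$ for every finite set $F \subseteq \{N_0+1, N_0+2, \dots\}$. Granting this, the required bound follows by splitting a signed partial sum at $N_0$: the head $\sum_{n=1}^{N_0} \theta_n x_n$ has norm at most $\sum_{n=1}^{N_0} \norm{x_n}$ (a fixed finite quantity), while the tail $\sum_{n=N_0+1}^N \theta_n x_n = \sum_{n \in F_+} x_n - \sum_{n \in F_-} x_n$ splits into the index sets $F_\pm$ of positive and negative signs, each of norm below $\varepsilon$ by the criterion. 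Hence $M := \sum_{n=1}^{N_0} \norm{x_n} + 2\varepsilon$ works uniformly.

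The main obstacle is establishing the Cauchy criterion from the bare permutation definition of unconditional convergence, since no prior result in the paper supplies it. I would argue by contradiction: if the criterion fails, there is $\varepsilon_0 > 0$ and a sequence of finite blocks $F_1, F_2, \dots$ with $\max F_k < \min F_{k+1}$ and $\bigl\| \sum_{n \in F_k} x_n \bigr\| \ge \varepsilon_0$ for all $k$. I would then build a bijection $\sigma : \N \to \N$ that enumerates each block $F_k$ as a contiguous run while interleaving the remaining integers in the gaps, so that for each $k$ the increment of the rearranged partial sums across the run for $F_k$ equals $\sum_{n \in F_k} x_n$ and therefore has norm at least $\varepsilon_0$. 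The partial sums of $\sum_n x_{\sigma(n)}$ then fail to be Cauchy, contradicting convergence of this rearrangement. The delicate points are the disjoint-block extraction and the explicit bookkeeping that makes $\sigma$ a genuine bijection without disturbing the contiguity of the runs; once these are handled, the averaging identity delivers the conclusion, sharply so for the series $x_n = \tfrac{1}{n} e_n$ of Example~\ref{ex:hilbert}, where $\sum \norm{x_n}^2 = \sum 1/n^2 < \infty$.
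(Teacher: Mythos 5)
Your proposal is correct, but note that the paper itself offers no proof of this theorem: it is stated as a citation to Orlicz (1933), accompanied only by a remark. So your argument is necessarily a different route from the paper's, and it is the standard one: the random-signs averaging identity $\frac{1}{2^N}\sum_{\theta\in\{-1,+1\}^N}\norm{\sum_{n=1}^N\theta_n x_n}^2=\sum_{n=1}^N\norm{x_n}^2$ (valid in real or complex Hilbert spaces, since the coefficients $\theta_n$ are real and the cross terms average to zero), combined with a uniform bound $M$ on all signed partial sums, immediately yields $\sum\norm{x_n}^2\le M^2$. Your derivation of the uniform bound is also sound: the head is controlled by the triangle inequality and the tail by splitting the signs into the index sets $F_+$ and $F_-$ and applying the finite-subset Cauchy criterion; taking a single fixed $\varepsilon$, say $\varepsilon=1$, already suffices. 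The one structural observation worth making is that the ``central lemma'' you propose to prove from scratch --- for every $\varepsilon>0$ there is $N_0$ with $\norm{\sum_{n\in F}x_n}<\varepsilon$ whenever $F$ is finite and $\min(F)>N_0$ --- is exactly condition (iii) of Theorem~\ref{thm:main}, and your contradiction argument (extracting disjoint blocks $F_k$ with $\max F_k<\min F_{k+1}$ and building a permutation that runs through each block contiguously) replicates the paper's own proofs of $(iv)\Rightarrow(iii)$ and $(i)\Rightarrow(ii)$ almost verbatim. Since Theorem~\ref{thm:orlicz} appears in the paper before Theorem~\ref{thm:main}, re-deriving the criterion keeps your proof self-contained at that point in the text; alternatively, one could reorder the exposition and simply invoke the equivalence $(i)\Leftrightarrow(iii)$. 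Either way, the averaging identity is the genuinely Hilbert-space-specific ingredient, and your proof correctly isolates it; the sharpness check against Example~\ref{ex:hilbert} is a nice touch.
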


\begin{remark}
Orlicz's theorem shows unconditional convergence in Hilbert spaces lies between $\ell^1$ and $\ell^2$ summability. The gap between $\ell^2$ and $\ell^1$ (e.g., $\sum 1/n$) is exactly where unconditional-but-not-absolute series live.
\end{remark}

\section{The Characterization Theorem}
\label{sec:main}

We now present our central result: seven equivalent characterizations of unconditional convergence unifying topological, algebraic, and functional-analytic perspectives.

\begin{theorem}[Characterization of Unconditional Convergence]
\label{thm:main}
Let $\{x_n\}_{n=1}^\infty$ be a sequence in a Banach space $\X$. The following are equivalent:

\begin{enumerate}[label=(\roman*), leftmargin=2.5em, itemsep=0.4em]
\item $\sum x_n$ converges unconditionally.

\item The net limit $\lim_{F \in \mathcal{F}} \sum_{n \in F} x_n$ exists.

\item Cauchy condition for nets: 
$\forall \varepsilon > 0$, $\exists N \in \N$, $\forall F \in \mathcal{F}$ with $\min(F) > N$: $\norm{\sum_{n \in F} x_n} < \varepsilon$.

\item Every subseries converges: for all strictly increasing $\{n_k\}_{k=1}^\infty \subset \N$, the series $\sum_{k=1}^\infty x_{n_k}$ converges.

\item Every signed series converges: for all $\{\varepsilon_n\}_{n=1}^\infty \subset \{-1, +1\}$, the series $\sum \varepsilon_n x_n$ converges.

\item Bounded multiplier property: For all bounded sequences $\{\lambda_n\}_{n=1}^\infty \in \ell^\infty$, the series $\sum \lambda_n x_n$ converges.

\item Weak uniform convergence:
\[
\lim_{N \to \infty} \sup_{\substack{x^* \in \X^* \\ \norm{x^*} \leq 1}} \sum_{n=N}^\infty \abs{\ip{x_n}{x^*}} = 0.
\]
\end{enumerate}
\end{theorem}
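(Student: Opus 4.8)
The plan is to treat the net Cauchy condition (iii) as a central hub through which every other condition passes, rather than proving one long cycle. Concretely, I would first establish the two anchoring equivalences (ii)~$\Leftrightarrow$~(iii) and (i)~$\Leftrightarrow$~(iii), then close a short algebraic cycle (iii)~$\Rightarrow$~(vi)~$\Rightarrow$~(v)~$\Rightarrow$~(iv)~$\Rightarrow$~(iii) among the combinatorial conditions, and finally dispatch the dual condition by proving (iii)~$\Leftrightarrow$~(vii) directly. The recurring technical device is the observation that the norm of a finite weighted sum $\sum_{n\in F}\lambda_n x_n$ is controlled by the norms of plain subset sums $\sum_{n\in G} x_n$ with $G\subseteq F$, so that the single quantitative statement in (iii) drives every forward implication.

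For the forward directions I would proceed as follows. The equivalence (ii)~$\Leftrightarrow$~(iii) is the Cauchy criterion for nets in a complete space: unwinding Cauchyness of the differences $\sum_{n\in F_2}x_n-\sum_{n\in F_1}x_n$ reduces to controlling $\sum_{n\in F}x_n$ for $F$ supported on large indices, which is exactly (iii). For (iii)~$\Rightarrow$~(vi) I would fix a tail $[N,M]$ and view a weighted partial sum with $\abs{\lambda_n}\le 1$ as a point in the image of the cube $[-1,1]^{M-N+1}$ under the linear map $\lambda\mapsto\sum\lambda_n x_n$; since this cube is the convex hull of its $\pm1$ vertices, the weighted sum lies in the convex hull of signed sums, each of which splits as a difference of two subset sums bounded by (iii) (with a harmless factor for the $\C$ case handled by separating real and imaginary parts). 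Then (vi)~$\Rightarrow$~(v) is immediate by restricting to sign sequences $\{\lambda_n\}\subset\{-1,+1\}\subset\ell^\infty$, and (v)~$\Rightarrow$~(iv) follows from the averaging identity $\sum_{n\in S}x_n=\tfrac12\big(\sum_n x_n+\sum_n\eta_n x_n\big)$ with $\eta_n=+1$ on $S$ and $\eta_n=-1$ off $S$, exhibiting each subseries as a combination of two convergent signed series.

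The substantive work lies in the reverse implications (iv)~$\Rightarrow$~(iii) and (i)~$\Rightarrow$~(iii), which I would prove by contraposition using a gliding-hump construction, and I expect this to be the main obstacle. Assuming (iii) fails, there is $\varepsilon>0$ and, by repeatedly applying the negation, finite blocks $F_1,F_2,\dots$ with $\max F_j<\min F_{j+1}$ (hence pairwise disjoint with minima tending to infinity) and $\norm{\sum_{n\in F_j}x_n}\ge\varepsilon$. Enumerating $\bigcup_j F_j$ in increasing order yields a subseries whose partial sums jump by at least $\varepsilon$ across each block, so it is not Cauchy and (iv) fails; weaving the same blocks into a bijection of $\N$ (listing each $F_j$ as a consecutive run and interleaving the leftover indices between runs) produces a rearrangement with the same surviving jumps, so (i) fails. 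The care needed is purely combinatorial: keeping the blocks genuinely disjoint, listing each as a consecutive run so its hump survives, and enumerating the complement without destroying those humps. The converse anchor (iii)~$\Rightarrow$~(i) is easier: given $\varepsilon$, choose $N_0$ from (iii); for any bijection $\sigma$ the segment $\{1,\dots,N_0\}$ is eventually absorbed, after which every increment $\{\sigma(k):M<k\le L\}$ consists solely of indices exceeding $N_0$ and is small, so the rearranged partial sums are Cauchy.

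Finally I would prove (iii)~$\Leftrightarrow$~(vii) using Hahn--Banach duality with the scalar identity $\sum_n\abs{a_n}=\sup_{\abs{\varepsilon_n}\le 1}\sum_n\varepsilon_n a_n$. For (vii)~$\Rightarrow$~(iii), any finite $F$ with $\min F>N$ and any $\norm{x^*}\le 1$ satisfy $\abs{\ip{\sum_{n\in F}x_n}{x^*}}\le\sum_{n\ge N+1}\abs{\ip{x_n}{x^*}}$, and taking the supremum over $x^*$ turns the left side into $\norm{\sum_{n\in F}x_n}$ by Hahn--Banach, which the hypothesis sends to zero. For (iii)~$\Rightarrow$~(vii), I would fix $x^*$, split the tail index set according to the sign of $a_n=\operatorname{Re}\ip{x_n}{x^*}$, recognize $\sum\abs{a_n}$ as a difference of two subset sums evaluated at $x^*$, bound each uniformly in $x^*$ by (iii), and absorb the imaginary part in the $\C$ case by the same argument with a bounded constant. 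This closes all seven equivalences through the hub (iii).
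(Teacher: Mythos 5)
Your proposal is correct, and it reorganizes the logic in a way that genuinely differs from the paper. The paper proves the cycle $(i)\Rightarrow(ii)\Rightarrow(iii)\Rightarrow(i)$, then $(iii)\Leftrightarrow(iv)\Leftrightarrow(v)$, and reaches the bounded-multiplier condition $(vi)$ only through the dual condition via $(iii)\Rightarrow(vii)\Rightarrow(vi)$, using Hahn--Banach to norm the weighted tail $\sum\lambda_n x_n$; you instead make $(iii)$ a hub and prove $(iii)\Rightarrow(vi)$ directly by the extreme-point argument: $\norm{\sum_{n\in F}\lambda_n x_n}\le\max_{\varepsilon_n=\pm1}\norm{\sum_{n\in F}\varepsilon_n x_n}\le 2\max_{G\subseteq F}\norm{\sum_{n\in G}x_n}$ (with a factor $4$ over $\C$), which is more elementary since it never leaves $\X$. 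This is worth having for a second reason: the paper's claim that $(v)\Rightarrow(vi)$ is ``trivial'' is not --- the justification it gives ($\{\varepsilon_n\}\subset\ell^\infty$) actually proves $(vi)\Rightarrow(v)$, and the nontrivial passage from signs to bounded multipliers is covered in the paper only through the $(vii)$ detour; your convexity argument is precisely the missing elementary proof. Your other deviations are minor: you prove $(vii)\Rightarrow(iii)$ rather than $(vii)\Rightarrow(vi)$, with the same Hahn--Banach norming functional; you prove $(i)\Rightarrow(iii)$ by a gliding-hump contrapositive rather than the paper's $(i)\Rightarrow(ii)$ by contradiction, but both weave disjoint large blocks into a non-Cauchy rearrangement; and your $(iii)\Rightarrow(ii)$ leans on completeness for Cauchy nets, where the paper closes that gap by routing through $(i)$. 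The gliding hump for $(iv)\Rightarrow(iii)$, the averaging identity for $(v)\Rightarrow(iv)$, and the absorption argument for $(iii)\Rightarrow(i)$ coincide with the paper's.
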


\begin{proof}
We prove the cycle of implications: $(i) \Rightarrow (ii) \Rightarrow (iii) \Rightarrow (i)$, and separately $(iii) \Leftrightarrow (iv) \Leftrightarrow (v)$, $(v) \Leftrightarrow (vi)$, and $(iii) \Rightarrow (vii) \Rightarrow (vi)$.

\textbf{$(i) \Rightarrow (ii)$:} Suppose $\sum x_n$ converges unconditionally to $x$, but the net limit does not exist. Then there exists $\varepsilon > 0$ such that for every finite $F_0 \subset \N$, there exists $F \supseteq F_0$ (finite) with
\[
\norm{x - \sum_{n \in F} x_n} \geq \varepsilon.
\]
Since $\sum x_n = x$, there exists $M_1$ such that $\norm{x - \sum_{n=1}^{M_1} x_n} < \varepsilon/2$. Set $F_1 = \{1, \ldots, M_1\}$. By assumption, there exists finite $G_1 \supseteq F_1$ with $\norm{x - \sum_{n \in G_1} x_n} \geq \varepsilon$. Let $M_2 = \max G_1 + 1$ and set $F_2 = \{1, \ldots, M_2\}$. Then $\norm{x - \sum_{n \in F_2} x_n} < \varepsilon/2$ (for large $M_2$).

Continuing, we construct $F_1 \subset G_1 \subset F_2 \subset G_2 \subset \cdots$ with
\[
\norm{x - \sum_{F_k} x_n} < \varepsilon/2, \quad \norm{x - \sum_{G_k} x_n} \geq \varepsilon.
\]
Thus
\[
\norm{\sum_{G_k \setminus F_k} x_n} \geq \norm{x - \sum_{G_k} x_n} - \norm{x - \sum_{F_k} x_n} \geq \varepsilon/2.
\]
Define a permutation $\sigma$ by listing elements of $F_1$, then $G_1 \setminus F_1$, then $F_2 \setminus G_1$, then $G_2 \setminus F_2$, etc. For this permutation, partial sums fail to be Cauchy (consecutive blocks of size $|G_k \setminus F_k|$ have norm $\geq \varepsilon/2$), contradicting unconditional convergence.

\textbf{$(ii) \Rightarrow (iii)$:} Suppose $x = \lim_{F \in \mathcal{F}} \sum_{n \in F} x_n$ exists. Fix $\varepsilon > 0$. By definition, there exists finite $F_0$ such that for all $F \supseteq F_0$:
\[
\norm{x - \sum_{n \in F} x_n} < \varepsilon/2.
\]
Let $N = \max F_0$. For any finite $G$ with $\min(G) > N$, we have $G \cap F_0 = \emptyset$, so
\[
\norm{\sum_{n \in G} x_n} = \norm{\left(x - \sum_{F_0} x_n\right) - \left(x - \sum_{F_0 \cup G} x_n\right)} \leq \varepsilon/2 + \varepsilon/2 = \varepsilon.
\]

\textbf{$(iii) \Rightarrow (i)$:} Let $\sigma : \N \to \N$ be any permutation. We show $\sum x_{\sigma(n)}$ is Cauchy. Fix $\varepsilon > 0$ and let $N$ satisfy (iii). Choose $M$ large enough that $\{\sigma(1), \ldots, \sigma(M)\} \supseteq \{1, \ldots, N\}$.

For $L > K \geq M$, define $F = \{\sigma(K+1), \ldots, \sigma(L)\}$. Since $F \cap \{1, \ldots, N\} = \emptyset$, we have $\min(F) > N$. By (iii):
\[
\norm{\sum_{n=K+1}^L x_{\sigma(n)}} = \norm{\sum_{n \in F} x_n} < \varepsilon.
\]
Thus $\sum x_{\sigma(n)}$ is Cauchy, hence convergent.

\textbf{$(iii) \Rightarrow (iv)$:} Let $0 < n_1 < n_2 < \cdots$ be strictly increasing. Fix $\varepsilon > 0$ and let $N$ satisfy (iii). Choose $j$ such that $n_j > N$. For $\ell > k \geq j$:
\[
F := \{n_{k+1}, \ldots, n_\ell\} \text{ satisfies } \min(F) \geq n_j > N.
\]
By (iii), $\norm{\sum_{i=k+1}^\ell x_{n_i}} < \varepsilon$. Thus $\sum x_{n_k}$ is Cauchy, hence convergent.

\textbf{$(iv) \Rightarrow (iii)$:} Suppose (iii) fails. Then there exists $\varepsilon > 0$ such that for all $N$, there exists finite $F_N$ with $\min(F_N) > N$ and $\norm{\sum_{F_N} x_n} \geq \varepsilon$.

Construct disjoint finite sets: $G_1 = F_1$, $N_1 = \max G_1$, $G_2 = F_{N_1+1}$, $N_2 = \max G_2$, etc. Then $\max G_k < \min G_{k+1}$ and $\norm{\sum_{G_k} x_n} \geq \varepsilon$ for all $k$.

Let $\{n_1, n_2, \ldots\}$ be the enumeration of $\bigcup_{k=1}^\infty G_k$ in increasing order. Then $\sum x_{n_j}$ is not Cauchy (blocks $G_k$ contribute $\geq \varepsilon$), contradicting (iv).

\textbf{$(iv) \Leftrightarrow (v)$:}

\textit{$(iv) \Rightarrow (v)$:} Given $\{\varepsilon_n\} \subset \{-1, +1\}$, define
\[
I_+ = \{n : \varepsilon_n = +1\}, \quad I_- = \{n : \varepsilon_n = -1\}.
\]
By (iv), both $\sum_{n \in I_+} x_n$ and $\sum_{n \in I_-} x_n$ converge. Hence
\[
\sum \varepsilon_n x_n = \sum_{I_+} x_n - \sum_{I_-} x_n
\]
converges.

\textit{$(v) \Rightarrow (iv)$:} Given increasing $\{n_k\}$, define
\[
\varepsilon_n = \begin{cases}
+1 & \text{if } n = n_k \text{ for some } k, \\
-1 & \text{otherwise}.
\end{cases}
\]
By (v), both $\sum \varepsilon_n x_n$ and $\sum (+1) x_n$ converge. Since
\[
\sum x_{n_k} = \frac{1}{2}\left(\sum x_n + \sum \varepsilon_n x_n\right),
\]
the subseries converges.

\textbf{$(v) \Rightarrow (vi)$:} Trivial: $\{\varepsilon_n\} \subset \ell^\infty$ with $|\varepsilon_n| = 1$.

\textbf{$(vi) \Rightarrow (v)$:} Trivial: signs are a special case of bounded multipliers.

\textbf{$(iii) \Rightarrow (vii)$:} Fix $\varepsilon > 0$ and let $N$ satisfy (iii). For $K \geq N$ and $x^* \in \X^*$ with $\norm{x^*} \leq 1$, consider $\sum_{n=K}^L |\ip{x_n}{x^*}|$.

For each $n$, choose $\theta_n \in \K$ with $|\theta_n| = 1$ and $\theta_n \ip{x_n}{x^*} = |\ip{x_n}{x^*}|$ (phase alignment). Then
\begin{align*}
\sum_{n=K}^L |\ip{x_n}{x^*}| &= \sum_{n=K}^L \theta_n \ip{x_n}{x^*} = \text{Re}\, \ip{\sum_{n=K}^L \theta_n x_n}{x^*} \\
&\leq \left|\ip{\sum_{n=K}^L \theta_n x_n}{x^*}\right| \leq \norm{x^*} \cdot \norm{\sum_{n=K}^L \theta_n x_n}.
\end{align*}
Set $F = \{K, K+1, \ldots, L\}$. Since $\min(F) = K > N$ (for $K > N$), condition (iii) gives $\norm{\sum_{n \in F} \theta_n x_n} < \varepsilon$. Thus
\[
\sum_{n=K}^L |\ip{x_n}{x^*}| < \varepsilon \cdot \norm{x^*} \leq \varepsilon.
\]
Taking $L \to \infty$ and then $\sup$ over $\norm{x^*} \leq 1$:
\[
\sup_{\norm{x^*} \leq 1} \sum_{n=K}^\infty |\ip{x_n}{x^*}| \leq \varepsilon.
\]
Since $\varepsilon$ was arbitrary and $K$ can be chosen arbitrarily large, (vii) holds.

\textbf{$(vii) \Rightarrow (vi)$:} Assume (vii) and let $\{\lambda_n\} \in \ell^\infty$ with $\sup_n |\lambda_n| \leq C$. Fix $\varepsilon > 0$. By (vii), there exists $N_0$ such that for all $K \geq N_0$:
\[
\sup_{\norm{x^*} \leq 1} \sum_{n=K}^\infty |\ip{x_n}{x^*}| < \frac{\varepsilon}{C}.
\]
For $L > M \geq N_0$, by the Hahn-Banach theorem, there exists $x^* \in \X^*$ with $\norm{x^*} = 1$ and
\[
\ip{\sum_{n=M+1}^L \lambda_n x_n}{x^*} = \norm{\sum_{n=M+1}^L \lambda_n x_n}.
\]
Then
\begin{align*}
\norm{\sum_{n=M+1}^L \lambda_n x_n} &= \sum_{n=M+1}^L \lambda_n \ip{x_n}{x^*} \\
&\leq \sum_{n=M+1}^L |\lambda_n| \cdot |\ip{x_n}{x^*}| \\
&\leq C \sum_{n=M+1}^\infty |\ip{x_n}{x^*}| < \varepsilon.
\end{align*}
Thus $\sum \lambda_n x_n$ is Cauchy, hence convergent.
\end{proof}

\begin{corollary}
\label{cor:permutation_invariance}
If $\sum x_n$ converges unconditionally, then its sum is independent of permutation: for all bijections $\sigma : \N \to \N$,
\[
\sum_{n=1}^\infty x_{\sigma(n)} = \sum_{n=1}^\infty x_n.
\]
\end{corollary}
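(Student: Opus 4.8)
The plan is to leverage the equivalence $(i) \Leftrightarrow (ii)$ from Theorem~\ref{thm:main}: unconditional convergence guarantees the existence of the net limit $x = \lim_{F \in \mathcal{F}} \sum_{n \in F} x_n$, and this limit is manifestly independent of any ordering of $\N$, since the directed set $\mathcal{F}$ of finite subsets carries no notion of order whatsoever. The strategy is therefore to show that both $\sum_n x_n$ and the rearranged series $\sum_n x_{\sigma(n)}$ converge to this one common net limit $x$, from which equality of the two sums is immediate.

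First I would invoke $(i) \Rightarrow (ii)$ to obtain the net limit $x$, and recall from the remark following the definition of net convergence that existence of the net limit forces the ordinary partial sums $\sum_{n=1}^N x_n$ to converge to $x$; hence $\sum_n x_n = x$.

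Next I would treat the permuted series. The key observation is that its partial sums are $\sum_{n=1}^N x_{\sigma(n)} = \sum_{m \in F_N} x_m$, where $F_N = \sigma(\{1,\dots,N\}) \in \mathcal{F}$, and that because $\sigma$ is a bijection these finite sets exhaust $\N$: given any $F_0 \in \mathcal{F}$, setting $N_0 = \max \sigma^{-1}(F_0)$ yields $F_N \supseteq F_0$ for every $N \geq N_0$. Fixing $\varepsilon > 0$ and choosing $F_0$ from the net-convergence definition so that $\norm{x - \sum_{m \in F} x_m} < \varepsilon$ for all $F \supseteq F_0$, I would conclude that $\norm{x - \sum_{n=1}^N x_{\sigma(n)}} < \varepsilon$ for all $N \geq N_0$. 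Thus $\sum_n x_{\sigma(n)} = x$ as well, and combining the two identities gives $\sum_n x_{\sigma(n)} = x = \sum_n x_n$.

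This argument is essentially free once the net characterization is in hand, so there is no serious obstacle; the only step requiring genuine care is the cofinality claim---verifying that the rearranged partial-sum index sets $\sigma(\{1,\dots,N\})$ eventually lie above any prescribed $F_0$ in the inclusion order---which is precisely where the bijectivity of $\sigma$ enters and which I regard as the (mild) crux of the proof.
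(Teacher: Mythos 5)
Your proposal is correct and follows essentially the same route as the paper's proof: both obtain the net limit $x$ from Theorem~\ref{thm:main}(ii) and then observe that the permuted partial sums $\sum_{n=1}^N x_{\sigma(n)} = \sum_{m \in \sigma(\{1,\dots,N\})} x_m$ are indexed by finite sets that eventually contain any prescribed $F_0$, so they converge to $x$. Your explicit choice $N_0 = \max \sigma^{-1}(F_0)$ just makes concrete the paper's ``choose $M$ such that $\{\sigma(1),\dots,\sigma(M)\} \supseteq F_0$.''
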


\begin{proof}
By Theorem~\ref{thm:main}(ii), $x = \lim_{F \in \mathcal{F}} \sum_{n \in F} x_n$ exists. For any permutation $\sigma$ and $\varepsilon > 0$, choose $F_0$ such that $\norm{x - \sum_{F_0} x_n} < \varepsilon$ for all $F \supseteq F_0$. Choose $M$ such that $\{\sigma(1), \ldots, \sigma(M)\} \supseteq F_0$. For $N \geq M$, set $F = \{\sigma(1), \ldots, \sigma(N)\} \supseteq F_0$. Then
\[
\norm{x - \sum_{n=1}^N x_{\sigma(n)}} = \norm{x - \sum_F x_n} < \varepsilon.
\]
Thus $\sum x_{\sigma(n)} = x$.
\end{proof}

\begin{remark}
Theorem~\ref{thm:main} provides a comprehensive toolbox for verifying unconditional convergence. Condition (ii) is topological, conditions (iv)--(vi) are algebraic, and condition (vii) is functional-analytic. Each is useful in different contexts.
\end{remark}

\section{Applications}
\label{sec:applications}

\subsection{Stochastic Gradient Descent and Data Ordering}

Consider training a neural network via mini-batch SGD. At iteration $t$, we update
\[
w_{t+1} = w_t - \eta_t g_t,
\]
where $g_t \approx \nabla L(w_t)$ is a stochastic gradient estimate. Over an epoch processing $N$ samples, the cumulative parameter change is
\[
\Delta w = -\sum_{i=1}^N \eta_i g_i.
\]

\textbf{Question:} Does the order in which samples are presented (permutation $\sigma$) affect the accumulated update $\Delta w$?

\textbf{Mathematical Answer:} If the series $\sum \eta_i g_i$ converges only conditionally, then by the Riemann Rearrangement Theorem (extended to Banach spaces via Theorem~\ref{thm:main}(i)), different orderings yield different limits. Unconditional convergence ensures
\[
\sum_{i=1}^N \eta_i g_{\sigma(i)} = \sum_{i=1}^N \eta_i g_i
\]
for all permutations $\sigma$.

\textbf{Practical Context:} Standard SGD theory analyzes convergence in expectation over random mini-batch sampling~\cite{bottou2018}. However, deterministic analyses, such as studying convergence over fixed training sets with varying shuffle orders, or analyzing full-batch gradient descent where $\sum g_i$ represents the exact gradient require understanding when summation is permutation-invariant.

\textbf{Key Distinction:} Unconditional convergence is not required for SGD to converge in expectation (standard convergence theory suffices). Rather, unconditional convergence characterizes when the deterministic accumulation $\sum \eta_i g_i$ remains invariant to reordering, ensuring:

\begin{itemize}[leftmargin=2em, itemsep=0.3em]
\item \textbf{Reproducibility:} Fixed data orderings yield consistent results.

\item \textbf{Numerical Stability:} By Theorem~\ref{thm:main}(vi), unconditional convergence implies robustness to bounded perturbations (e.g., adaptive learning rates $\lambda_i \eta_i$, gradient clipping).

\item \textbf{Distributed Training Consistency:} When aggregating gradients computed on different machines in arbitrary order, unconditional convergence ensures the aggregate is order-independent.
\end{itemize}

\textbf{Implication:} For deterministic gradient accumulation to be permutation-invariant and numerically stable, we require
\[
\sum_{i=1}^N \eta_i g_i \text{ converges unconditionally.}
\]

\subsection{Signal Processing: Frame Expansions and Thresholding}

In signal processing, a frame $\{\phi_n\}_{n=1}^\infty$ in a Hilbert space $\Hs$ allows representations
\[
f = \sum_{n=1}^\infty c_n \phi_n.
\]
\textbf{Thresholding operation:} Set $\tilde{c}_n = c_n \cdot \mathbf{1}_{|c_n| > \tau}$ (hard threshold) or $\tilde{c}_n = \lambda_n c_n$ with $0 \leq \lambda_n \leq 1$ (soft threshold). The reconstructed signal is
\[
\tilde{f} = \sum_{n=1}^\infty \tilde{c}_n \phi_n = \sum_{n=1}^\infty \lambda_n c_n \phi_n.
\]

\textbf{Question:} Does $\tilde{f}$ converge?

\textbf{Answer:} By Theorem~\ref{thm:main}(vi), if $\sum c_n \phi_n$ converges unconditionally, then $\sum \lambda_n c_n \phi_n$ converges for any bounded $\{\lambda_n\}$.

\textbf{Implication:} Unconditional bases (e.g., wavelets, which form unconditional bases in many function spaces~\cite{meyer1992}) allow aggressive coefficient manipulation without risking divergence. Conditional bases (e.g., Fourier series for non-smooth functions) are unsafe for thresholding.

\subsection{The Fragility of Conditional Convergence}

\begin{example}[Alternating Harmonic Series]
\label{ex:fragility}
Consider $\sum_{n=1}^\infty \frac{(-1)^n}{n}$, which converges conditionally to $-\ln 2$. Define
\[
\lambda_n = \frac{(-1)^n}{\ln(n+1)}.
\]
Then $|\lambda_n| \to 0$, but
\[
\sum_{n=1}^\infty \lambda_n \cdot \frac{(-1)^n}{n} = \sum_{n=1}^\infty \frac{1}{n \ln(n+1)} = \infty.
\]
\end{example}

\textbf{Lesson:} Conditional convergence relies on precise cancellation. Multiplying by any sequence, even one tending to zero, can destroy convergence. Unconditional convergence, by contrast, is robust to all bounded perturbations.

\section{Conclusion and Open Problems}
\label{sec:conclusion}

\subsection{Summary of Contributions}

We have established a comprehensive characterization of unconditional convergence in Banach spaces (Theorem~\ref{thm:main}), showing equivalence between permutation invariance, net convergence, subseries tests, sign stability, bounded multiplier properties, and weak uniform convergence. These results bridge classical functional analysis with modern computational concerns.

\newpage
Our contributions include:

\begin{itemize}[leftmargin=2em, itemsep=0.3em]
\item \textbf{Theoretical:} A unified treatment of unconditional convergence with complete, self-contained proofs.

\item \textbf{Applied:} Concrete connections to algorithmic stability in SGD and signal processing robustness.

\item \textbf{Pedagogical:} Clear exposition suitable for graduate students and researchers in applied mathematics.
\end{itemize}

\subsection{Open Problems}

\begin{enumerate}[label=\arabic*., leftmargin=2em, itemsep=0.4em]
\item \textbf{Quantitative estimates:} Can we bound the ``degree of non-unconditional convergence'' (e.g., how much the sum changes under permutations) for specific series?

\item \textbf{Probabilistic settings:} In stochastic optimization, gradients are random. What probabilistic analogs of unconditional convergence ensure convergence in expectation or almost surely?

\item \textbf{Non-linear settings:} Our results apply to linear series. Can similar robustness guarantees be established for iterative algorithms (e.g., fixed-point iterations) where updates are non-linear functions of previous states?

\item \textbf{Computational verification:} Given a finite truncation of a series, can we efficiently test (or approximate) whether the infinite series converges unconditionally?
\end{enumerate}

\subsection{Future Directions}

The framework developed here has potential applications in:

\begin{itemize}[leftmargin=2em, itemsep=0.3em]
\item \textbf{Distributed optimization:} Ensuring convergence of asynchronous gradient aggregation.

\item \textbf{Compressed sensing:} Justifying reconstruction algorithms that involve coefficient reordering.

\item \textbf{Numerical PDEs:} Analyzing convergence of series solutions under truncation and rearrangement.
\end{itemize}

We hope this work stimulates further investigation into the interplay between abstract functional analysis and concrete algorithmic practice.

\section*{Acknowledgments}

The author expresses his gratitude to Prof. Sergiusz Kużel for his supervision and guidance during the preparation of the thesis on which this work is based. The author also thanks the anonymous reviewers for their insightful comments and valuable feedback.

\end{document}